\documentclass[twocolumn, switch]{article} 

\usepackage{preprint}

\usepackage{amsmath, amsthm, amssymb, amsfonts}
\usepackage{graphicx}
\usepackage{natbib}

\usepackage[utf8]{inputenc}	
\usepackage[T1]{fontenc}	
\usepackage{xcolor}		
\usepackage[colorlinks = true,
            linkcolor = purple,
            urlcolor  = blue,
            citecolor = cyan,
            anchorcolor = black]{hyperref}	
\usepackage{booktabs} 		
\usepackage{nicefrac}		
\usepackage{microtype}		
\usepackage{lineno}		
\usepackage{float}			

\newcommand{\calI}{\mathcal I}
\newcommand{\calA}{\mathcal A}
\newcommand{\calG}{\mathcal G}

\newcommand{\CR}{\mathrm{CR}}
\newcommand{\OPT}{\mathrm{OPT}}
\newcommand{\Gap}{\mathrm{Gap}}
\newcommand{\Adv}{\mathrm{ADV}}
\newcommand{\RLA}{\mathrm{RLA}}
\newcommand{\FLA}{\mathrm{FLA}}
\usepackage{algorithm}
\usepackage{algpseudocode}

\usepackage{lipsum}		

\usepackage{newfloat}
\DeclareFloatingEnvironment[name={Supplementary Figure}]{suppfigure}
\usepackage{sidecap}
\sidecaptionvpos{figure}{c}

\usepackage{titlesec}
\titlespacing\section{0pt}{12pt plus 3pt minus 3pt}{1pt plus 1pt minus 1pt}
\titlespacing\subsection{0pt}{10pt plus 3pt minus 3pt}{1pt plus 1pt minus 1pt}
\titlespacing\subsubsection{0pt}{8pt plus 3pt minus 3pt}{1pt plus 1pt minus 1pt}

\usepackage{tikz,xcolor,hyperref}

\definecolor{lime}{HTML}{A6CE39}
\DeclareRobustCommand{\orcidicon}{
	\begin{tikzpicture}
	\draw[lime, fill=lime] (0,0)
	circle [radius=0.16]
	node[white] {{\fontfamily{qag}\selectfont \tiny ID}};
	\draw[white, fill=white] (-0.0625,0.095)
	circle [radius=0.007];
	\end{tikzpicture}
	\hspace{-2mm}
}
\foreach \x in {A, ..., Z}{\expandafter\xdef\csname orcid\x\endcsname{\noexpand\href{https://orcid.org/\csname orcidauthor\x\endcsname}
			{\noexpand\orcidicon}}
}

\usepackage{graphicx}
\usepackage{amsthm}
\usepackage{dsfont}
\usepackage{placeins}
\usepackage{booktabs}
\usepackage{caption}
\usepackage{subcaption}
\usepackage{placeins}
\newtheorem{theorem}{Theorem}    
\newtheorem{corollary}[theorem]{Corollary}      
\newtheorem{lemma}{Lemma}          

\newtheorem{proposition}[theorem]{Proposition}   
\newtheorem{assumption}{Assumption}

\usepackage[table]{xcolor}
\usepackage{pgfplots}
\pgfplotsset{compat=1.18}
\usetikzlibrary{arrows.meta,decorations.pathreplacing,positioning}

\usepackage{float}
\usepackage{booktabs}
\usepackage{multirow}
\usepackage{caption}
\usepackage{threeparttable} 
\usepackage{siunitx} 

\title{Learning-Augmented Online Allocation under Unreliable Advice: Robustness, Exposure Fairness, and Distribution Shift}

\usepackage{authblk}

\author[1\thanks{\tt{fredy-vale-manuel.pokou@inria.fr}}]{Frédy. Pokou \orcidA{}}

\affil[1]{Inria, University of Lille, CNRS, Centrale Lille Villeneuve-d’Ascq, France}

\begin{document}

\twocolumn[ 
  \begin{@twocolumnfalse} 

\maketitle

\begin{abstract}
Learning-augmented algorithms improve online decisions using predictions, but unreliable advice may harm efficiency and fairness. We study an online allocation problem with finite candidate sets, irreversible decisions, and exposure constraints. We propose a robust and fair rule combining advice with a conservative fallback and fairness correction. Under bounded-error assumptions, we prove consistency and robustness with loss proportional to prediction error. Experiments show stability under adversarial advice and significant reductions in exposure disparity.
\end{abstract}

\keywords{Learning-augmented algorithms\and Online allocation\and Competitive analysis\and Exposure fairness\and Distribution shift\and Robust decision-making.} 
\vspace{0.35cm}

  \end{@twocolumnfalse} 
] 



\section{Introduction}

Online allocation problems arise in recommendation, advertising, labor-market platforms, and matching systems. Classical online algorithms provide worst-case guarantees, but may be conservative; purely data-driven rules can perform well on average, but may fail under misspecification, distribution shift, or biased predictions. Learning-augmented algorithms address this tension by using predictions while retaining robustness guarantees \citep{lykouris2021competitive,purohit2018improving,mitzenmacheralgorithms}. Classical online matching and advertising allocation provide the algorithmic background \citep{karp1990optimal,mehta2007adwords}, while exposure-based fairness constraints are central in ranking and recommendation \citep{singh2018fairness}.

This paper studies whether learned advice can be used in online allocation while preserving robustness and controlling exposure imbalance. We propose a robust and fair learning-augmented rule that combines predictive advice with a conservative fallback and a virtual-queue fairness correction.

\paragraph{Contributions.}
First, we formulate a finite-horizon online allocation model with learned advice, conservative scores, and exposure targets. Second, we introduce a robust/fair learning-augmented policy. Third, we prove a central finite-sample guarantee: the robust rule is simultaneously consistent when advice is accurate and protected by a conservative fallback when advice is inaccurate. We also derive an advice-relative robustness certificate of the form
\[
\CR_T(\RLA)\ge \CR_T(\Adv)-O(\varepsilon_T),
\]
and a finite-time exposure bound for the fair rule. Fourth, we provide reproducible experiments on MovieLens-derived online allocation instances under benign noise, adversarial advice, and distribution shift.

\section{Model}

Let \(T\in\mathbb N\) be the horizon. At each time \(t\in[T]\), a request arrives and the decision-maker observes a finite feasible slate \(\calA_t\subseteq\calI\). The action \(a_t\in\calA_t\) is chosen irrevocably and yields reward \(r_t(a_t)\in[0,1]\). Each item \(i\in\calI\) belongs to a group \(g(i)\in\calG\), where \(\calG\) is finite. Before choosing, the decision-maker observes two score vectors on \(\calA_t\): a learned advice vector \(\hat r_t\) and a conservative fallback vector \(b_t\).

For a policy \(\pi\), let \(a_t^\pi\) be its action and
\begin{equation}
W_T(\pi)=\sum_{t=1}^T r_t(a_t^\pi).
\end{equation}
The offline benchmark is
\begin{equation}
\OPT_T=\sum_{t=1}^T\max_{i\in\calA_t}r_t(i),
\quad
\CR_T(\pi)=\frac{W_T(\pi)}{\OPT_T},
\end{equation}
whenever \(\OPT_T>0\). This benchmark is slate-wise and intentionally strong. In capacitated variants, \(\calA_t\) can be interpreted as the remaining feasible actions after past decisions.

The average advice and fallback errors are
\begin{equation}
\varepsilon_T=\frac1T\sum_{t=1}^T
\|\hat r_t-r_t\|_{\infty,\calA_t},
\quad
\kappa_T=\frac1T\sum_{t=1}^T
\|b_t-r_t\|_{\infty,\calA_t}.
\end{equation}
For a target exposure vector \(\rho\in\Delta(\calG)\), group exposure is
\begin{equation}
E_T^\pi(g)=\frac1T\sum_{t=1}^T\mathbf 1\{g(a_t^\pi)=g\},
\end{equation}
and the exposure gap is
\begin{equation}
\Gap_T^\rho(\pi)=\max_{g\in\calG}|E_T^\pi(g)-\rho_g|.
\end{equation}
The advice-only policy is
\begin{equation}
a_t^{\Adv}\in\arg\max_{i\in\calA_t}\hat r_t(i).
\end{equation}

\section{Algorithm}

For \(\alpha\in[0,1]\), define the robust learning-augmented score
\begin{equation}
s_t^\alpha(i)=(1-\alpha)\hat r_t(i)+\alpha b_t(i).
\end{equation}
The robust policy \(\RLA\) selects
\begin{equation}
a_t^{\RLA}\in\arg\max_{i\in\calA_t}s_t^\alpha(i).
\end{equation}

To control exposure, define virtual imbalances
\begin{equation}
Q_t(g)=\sum_{s=1}^{t-1}
\big(\mathbf 1\{g(a_s)=g\}-\rho_g\big),
\quad Q_1(g)=0.
\end{equation}
For \(\lambda\ge0\), the fair robust policy \(\FLA\) selects
\begin{equation}
a_t^{\FLA}\in
\arg\max_{i\in\calA_t}
\{s_t^\alpha(i)-\lambda Q_t(g(i))\}.
\end{equation}

\begin{algorithm}[H]
\caption{Fair Robust Learning-Augmented Allocation}
\begin{algorithmic}[1]
\Require \(\alpha\in[0,1]\), \(\lambda\ge0\), target exposure \(\rho\in\Delta(\calG)\)
\State Initialize \(Q_1(g)=0\) for all \(g\in\calG\).
\For{\(t=1,\ldots,T\)}
\State Observe \(\calA_t\), \(\hat r_t\), and \(b_t\).
\State Compute \(s_t^\alpha(i)=(1-\alpha)\hat r_t(i)+\alpha b_t(i)\).
\State Choose \(a_t\in\arg\max_{i\in\calA_t}\{s_t^\alpha(i)-\lambda Q_t(g(i))\}\).
\State Update \(Q_{t+1}(g)=Q_t(g)+\mathbf 1\{g(a_t)=g\}-\rho_g\).
\EndFor
\end{algorithmic}
\end{algorithm}

\section{Theory}

All results are deterministic conditional on the realized sequence
$(\calA_t,r_t,\hat r_t,b_t)_{t=1}^T$.

\begin{assumption}[Bounded finite slates]
\label{ass:bounded}
For all $t$ and $i\in\calA_t$,
$r_t(i),\hat r_t(i),b_t(i)\in[0,1]$, and
$1\le |\calA_t|<\infty$.
\end{assumption}

\begin{assumption}[Non-degenerate benchmark]
\label{ass:opt}
There exists $\omega>0$ such that $\OPT_T\ge \omega T$.
\end{assumption}

\begin{lemma}[Perturbation stability]
\label{lem:stability}
Let $s_t$ be any score vector on $\calA_t$, and let
$a_t^s\in\arg\max_{i\in\calA_t}s_t(i)$. Then
\[
\max_{i\in\calA_t}r_t(i)-r_t(a_t^s)
\le
2\|s_t-r_t\|_{\infty,\calA_t}.
\]
\end{lemma}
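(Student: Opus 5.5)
Let $\delta_t=\|s_t-r_t\|_{\infty,\calA_t}$ and fix a benchmark action $i_t^\star\in\arg\max_{i\in\calA_t}r_t(i)$, which exists because $\calA_t$ is finite and nonempty by Assumption~\ref{ass:bounded}. The plan is a three-term telescoping decomposition of the regret at time $t$:
\[
r_t(i_t^\star)-r_t(a_t^s)
=\big(r_t(i_t^\star)-s_t(i_t^\star)\big)
+\big(s_t(i_t^\star)-s_t(a_t^s)\big)
+\big(s_t(a_t^s)-r_t(a_t^s)\big).
\]
Each of the three brackets will be bounded separately.

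For the first and third brackets, both $i_t^\star$ and $a_t^s$ lie in $\calA_t$. Hence each bracket is at most $|s_t(i)-r_t(i)|$ for some $i\in\calA_t$, and so at most $\delta_t$ by definition of the sup-norm restricted to $\calA_t$. For the middle bracket, $a_t^s$ maximizes $s_t$ over $\calA_t$ and $i_t^\star\in\calA_t$, so $s_t(i_t^\star)\le s_t(a_t^s)$ and the bracket is nonpositive. Adding the three bounds gives $r_t(i_t^\star)-r_t(a_t^s)\le 2\delta_t$, which is the claim.

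No step here is a real obstacle. The only care needed is that the argmax and the sup-norm are both taken over the same finite set $\calA_t$, so that every comparison stays inside the slate. Ties in the argmax do not matter, because the argument uses only the maximizing property of $a_t^s$. The boundedness conditions of Assumption~\ref{ass:bounded} are not needed beyond guaranteeing that the maxima exist.
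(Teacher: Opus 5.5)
Your proposal is correct and matches the paper's proof, which uses the same decomposition around $i_t^\star\in\arg\max_{i\in\calA_t}r_t(i)$. The paper likewise drops the middle term $s_t(i_t^\star)-s_t(a_t^s)\le 0$ by the argmax property and bounds the two remaining terms by $\|s_t-r_t\|_{\infty,\calA_t}$.
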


\begin{proof}
Let $i_t^\star\in\arg\max_{i\in\calA_t}r_t(i)$. Since
$s_t(a_t^s)\ge s_t(i_t^\star)$,
\[
r_t(i_t^\star)-r_t(a_t^s)
\le
|r_t(i_t^\star)-s_t(i_t^\star)|
+
|s_t(a_t^s)-r_t(a_t^s)|.
\]
The result follows by taking the maximum norm over $\calA_t$.
\end{proof}

\begin{theorem}[Central consistency-robustness bound]
\label{thm:central}
Under Assumption~\ref{ass:bounded}, for every $\alpha\in[0,1]$,
\[
\OPT_T-W_T(\RLA)
\le
2T\big((1-\alpha)\varepsilon_T+\alpha\kappa_T\big).
\]
Consequently, under Assumption~\ref{ass:opt},
\[
\CR_T(\RLA)
\ge
1-\frac{2}{\omega}
\big((1-\alpha)\varepsilon_T+\alpha\kappa_T\big).
\]
\end{theorem}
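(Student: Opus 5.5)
The plan is to apply Lemma~\ref{lem:stability} round by round with the score vector $s_t=s_t^\alpha$, whose maximizer is exactly $a_t^{\RLA}$, and then sum over $t$. For each $t$ this gives
\[
\max_{i\in\calA_t}r_t(i)-r_t(a_t^{\RLA})\le 2\|s_t^\alpha-r_t\|_{\infty,\calA_t}.
\]
Summing over $t\in[T]$, the left-hand side becomes exactly $\OPT_T-W_T(\RLA)$ by the definitions of $\OPT_T$ and $W_T$.

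The only substantive step is to bound the perturbation of the convex combination. I will write $s_t^\alpha-r_t=(1-\alpha)(\hat r_t-r_t)+\alpha(b_t-r_t)$ pointwise on $\calA_t$. This uses $(1-\alpha)+\alpha=1$, so that $r_t$ splits with the same weights. The triangle inequality for the sup-norm over the finite set $\calA_t$, together with $\alpha,1-\alpha\ge0$, then yields
\[
\|s_t^\alpha-r_t\|_{\infty,\calA_t}\le(1-\alpha)\|\hat r_t-r_t\|_{\infty,\calA_t}+\alpha\|b_t-r_t\|_{\infty,\calA_t}.
\]
Assumption~\ref{ass:bounded} guarantees that these norms are finite and that the maxima exist, since the slates are nonempty and finite. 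Summing over $t$ and using the definitions of $\varepsilon_T$ and $\kappa_T$ gives $\sum_t\|s_t^\alpha-r_t\|_{\infty,\calA_t}\le T((1-\alpha)\varepsilon_T+\alpha\kappa_T)$. Multiplying by $2$ gives the first inequality.

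For the ratio bound, Assumption~\ref{ass:opt} gives $\OPT_T\ge\omega T>0$, so $\CR_T(\RLA)$ is well defined. I will then write
\[
\CR_T(\RLA)=1-\frac{\OPT_T-W_T(\RLA)}{\OPT_T}.
\]
Because the numerator is nonnegative, replacing $\OPT_T$ by the smaller quantity $\omega T$ in the denominator can only increase the subtracted fraction. Combining this with the first bound gives $1-\frac{2}{\omega}((1-\alpha)\varepsilon_T+\alpha\kappa_T)$.

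There is no real obstacle in this argument. The one point that needs care is the convex-combination decomposition: it requires the weights to sum to one, and it is the step where $\alpha\in[0,1]$ is used.
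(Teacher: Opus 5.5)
Your proposal is correct and is essentially the paper's proof: bound $\|s_t^\alpha-r_t\|_{\infty,\calA_t}$ by $(1-\alpha)\|\hat r_t-r_t\|_{\infty,\calA_t}+\alpha\|b_t-r_t\|_{\infty,\calA_t}$, apply Lemma~\ref{lem:stability} with $s_t=s_t^\alpha$, sum over $t$, and use $\OPT_T\ge\omega T$. Writing $\CR_T(\RLA)=1-(\OPT_T-W_T(\RLA))/\OPT_T$ with a nonnegative numerator simply spells out the paper's ``divide by $\OPT_T$'' step.
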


\begin{proof}
For each $t$,
\[
\|s_t^\alpha-r_t\|_{\infty,\calA_t}
\le
(1-\alpha)\|\hat r_t-r_t\|_{\infty,\calA_t}
+
\alpha\|b_t-r_t\|_{\infty,\calA_t}.
\]
Apply Lemma~\ref{lem:stability} with $s_t=s_t^\alpha$, sum over $t$, and divide by $\OPT_T\ge\omega T$.
\end{proof}

\begin{corollary}[Consistency]
\label{cor:consistency}
If $\alpha=0$, then
\[
\CR_T(\Adv)\ge 1-\frac{2\varepsilon_T}{\omega}.
\]
In particular, perfect advice $(\varepsilon_T=0)$ is offline-optimal.
\end{corollary}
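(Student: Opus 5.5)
The plan is to specialize Theorem~\ref{thm:central} to $\alpha=0$. The one point that needs care is that $\RLA$ then coincides with $\Adv$ up to tie-breaking. With $\alpha=0$ the robust score is $s_t^0(i)=\hat r_t(i)$ for every $i\in\calA_t$. Hence $\arg\max_{i\in\calA_t}s_t^0(i)=\arg\max_{i\in\calA_t}\hat r_t(i)$, and any selection $a_t^{\Adv}$ is a legitimate selection of $\RLA$ with $\alpha=0$.

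If one prefers not to rely on the tie-breaking convention, I would instead apply Lemma~\ref{lem:stability} directly with $s_t=\hat r_t$ and $a_t^s=a_t^{\Adv}$. This gives $\max_{i\in\calA_t}r_t(i)-r_t(a_t^{\Adv})\le 2\|\hat r_t-r_t\|_{\infty,\calA_t}$ for each $t$. Summing over $t\in[T]$ yields $\OPT_T-W_T(\Adv)\le 2T\varepsilon_T$, which is exactly the first bound of Theorem~\ref{thm:central} with $\alpha=0$, since the $\kappa_T$ term vanishes.

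Next, divide by $\OPT_T$, which is positive by Assumption~\ref{ass:opt}. This gives $\CR_T(\Adv)\ge 1-2T\varepsilon_T/\OPT_T$. Using $\OPT_T\ge\omega T$, we get $2T\varepsilon_T/\OPT_T\le 2\varepsilon_T/\omega$, which is the stated inequality.

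For the final claim, set $\varepsilon_T=0$. Then $\CR_T(\Adv)\ge 1$. On the other hand, $W_T(\Adv)\le\OPT_T$ holds termwise, because $r_t(a_t^{\Adv})\le\max_{i\in\calA_t}r_t(i)$. Hence $W_T(\Adv)=\OPT_T$, i.e.\ advice-only is offline-optimal.

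Perfect-advice optimality does not actually need Assumption~\ref{ass:opt}. Since $\varepsilon_T$ is an average of nonnegative terms, $\varepsilon_T=0$ forces $\hat r_t=r_t$ on $\calA_t$ for every $t$. The advice argmax is then a reward argmax at every step, so $W_T(\Adv)=\OPT_T$ with no positivity condition required.

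There is no real obstacle here. The only subtle step is the identification of $\RLA$ at $\alpha=0$ with $\Adv$ when there are ties, and going through Lemma~\ref{lem:stability} directly, as above, sidesteps it.
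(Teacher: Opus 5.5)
Your proposal is correct and follows the paper's implicit argument: the corollary is Theorem~\ref{thm:central} specialized to $\alpha=0$, where $s_t^0=\hat r_t$, so the bound reduces to $\OPT_T-W_T(\Adv)\le 2T\varepsilon_T$, and dividing by $\OPT_T\ge\omega T$ gives the claim. Your two additions are sound refinements the paper leaves unstated: applying Lemma~\ref{lem:stability} directly to $a_t^{\Adv}$ to avoid tie-breaking issues, and combining the termwise bound $W_T(\Adv)\le\OPT_T$ with $\hat r_t=r_t$ on $\calA_t$ to get exact optimality at $\varepsilon_T=0$.
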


\begin{corollary}[Robust fallback protection]
\label{cor:fallback}
If $\alpha=1$, then
\[
\CR_T(\RLA)\ge 1-\frac{2\kappa_T}{\omega}.
\]
Thus the policy remains protected whenever the conservative score has bounded error, irrespective of the advice error.
\end{corollary}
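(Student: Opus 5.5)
The plan is to specialize Theorem~\ref{thm:central} to the endpoint $\alpha=1$. When $\alpha=1$ the robust score is $s_t^1(i)=b_t(i)$ for every $i\in\calA_t$, so $\RLA$ with $\alpha=1$ is exactly the greedy rule on the fallback vector. The advice $\hat r_t$ has coefficient $1-\alpha=0$ in both the score and the bound. It therefore plays no role in the decision or in the guarantee.

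The argument has three steps, carried out in order.
\begin{enumerate}
\item Invoke the first inequality of Theorem~\ref{thm:central}, which holds under Assumption~\ref{ass:bounded} for every $\alpha\in[0,1]$, at $\alpha=1$. This gives
\[
\OPT_T-W_T(\RLA)\le 2T\kappa_T .
\]
Alternatively, apply Lemma~\ref{lem:stability} directly with $s_t=b_t$ and sum over $t$. This also yields $\sum_t\big(\max_{i\in\calA_t}r_t(i)-r_t(a_t^{\RLA})\big)\le 2\sum_t\|b_t-r_t\|_{\infty,\calA_t}=2T\kappa_T$.
\item Use Assumption~\ref{ass:opt}: since $\OPT_T\ge\omega T>0$, we may divide by $\OPT_T$ to get
\[
\CR_T(\RLA)=1-\frac{\OPT_T-W_T(\RLA)}{\OPT_T}\ge 1-\frac{2T\kappa_T}{\omega T}=1-\frac{2\kappa_T}{\omega}.
\]
\item Justify the interpretive clause. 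The right-hand side depends only on $\kappa_T$ and $\omega$, not on $\varepsilon_T$, and the policy's actions do not depend on $\hat r_t$ at all. Hence the guarantee holds for arbitrary, even adversarial, advice.
\end{enumerate}

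There is no real obstacle here. The only point needing care is that the division in step 2 is legitimate and the competitive ratio is well defined, which is exactly what Assumption~\ref{ass:opt} supplies. I would also remark that if $\kappa_T\ge\omega/2$ the bound is vacuous but still correct.
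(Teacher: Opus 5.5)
Your proposal is correct and matches the paper's approach: the paper gives no separate proof, and the corollary is exactly Theorem~\ref{thm:central} evaluated at $\alpha=1$, where the $\varepsilon_T$ term vanishes and dividing by $\OPT_T\ge\omega T$ yields $1-2\kappa_T/\omega$. Your alternative route through Lemma~\ref{lem:stability} with $s_t=b_t$ is simply the theorem's own proof specialized to $\alpha=1$, so it does not differ in substance.
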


\begin{corollary}[Advice-relative certificate]
\label{cor:certificate}
Under Assumptions~\ref{ass:bounded}--\ref{ass:opt},
\[
\CR_T(\RLA)
\ge
\CR_T(\Adv)
-
\frac{2\alpha}{\omega}(\kappa_T+\varepsilon_T).
\]
Hence, if $\kappa_T=O(\varepsilon_T)$, then
\[
\CR_T(\RLA)\ge \CR_T(\Adv)-O(\varepsilon_T).
\]
\end{corollary}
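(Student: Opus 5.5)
The plan is to reduce the certificate to a per-round comparison between the two policies and then sum over rounds. Fix a round $t$, write $i=a_t^{\Adv}$ and $j=a_t^{\RLA}$, and set $\varepsilon_t=\|\hat r_t-r_t\|_{\infty,\calA_t}$ and $\kappa_t=\|b_t-r_t\|_{\infty,\calA_t}$. Since $W_T(\Adv)-W_T(\RLA)=\sum_t\big(r_t(i)-r_t(j)\big)$, it suffices to prove the per-round inequality
\[
r_t(i)-r_t(j)\le 2\alpha(\varepsilon_t+\kappa_t).
\]
Summing it gives $W_T(\Adv)-W_T(\RLA)\le 2\alpha T(\varepsilon_T+\kappa_T)$. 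Dividing by $\OPT_T\ge\omega T$ from Assumption~\ref{ass:opt} then yields the stated bound. The second claim follows at once: if $\kappa_T\le C\varepsilon_T$, the penalty is at most $\tfrac{2\alpha(1+C)}{\omega}\varepsilon_T=O(\varepsilon_T)$, uniformly in $\alpha\in[0,1]$.

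\textbf{Per-round step.} If $i=j$ there is nothing to prove, so assume $i\neq j$. Two optimality relations are available. Since $i$ maximizes $\hat r_t$, we have $\hat r_t(i)\ge\hat r_t(j)$. Since $j$ maximizes $s_t^\alpha$, we have
\[
(1-\alpha)\big(\hat r_t(i)-\hat r_t(j)\big)\le \alpha\big(b_t(j)-b_t(i)\big).
\]
I will split the reward difference as
\[
r_t(i)-r_t(j)=\big[(r_t-s^\alpha_t)(i)-(r_t-s^\alpha_t)(j)\big]+\big[s^\alpha_t(i)-s^\alpha_t(j)\big].
\]
The second bracket is $\le0$. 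Writing $r_t-s_t^\alpha=(1-\alpha)(r_t-\hat r_t)+\alpha(r_t-b_t)$, the $\alpha$-part of the first bracket is at most $2\alpha\kappa_t$, as in the proof of Theorem~\ref{thm:central}.

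\textbf{Expected obstacle.} The remaining term is $(1-\alpha)\big[(r_t-\hat r_t)(i)-(r_t-\hat r_t)(j)\big]$. Bounded naively, it gives $2(1-\alpha)\varepsilon_t$, which carries no factor of $\alpha$. Getting $2\alpha\varepsilon_t$ instead is the crux. The idea I will try first is to use the advice-optimality of $i$ a second time, to trade this term against the advice margin. That margin is itself at most $\tfrac{\alpha}{1-\alpha}\big(b_t(j)-b_t(i)\big)$, which, after replacing $b_t$ by $r_t\pm\kappa_t$, is of order $\alpha$. Concretely, I will write
\[
r_t(i)-r_t(j)\le \big[\hat r_t(i)-\hat r_t(j)\big]+2\varepsilon_t,
\]
and absorb the margin via the displayed $\alpha$-inequality. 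I will then try to show that the residual $\varepsilon_t$-contribution only appears when the switch from $i$ to $j$ is triggered by the $\alpha b_t$ term, so that it can be charged at weight $\alpha$.

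\textbf{Fallback if that fails.} If the charging argument does not close, I will instead attempt the interpolation route. This means showing that $\alpha\mapsto W_T(\text{policy at }\alpha)$ loses at most $2(\varepsilon_T+\kappa_T)T$ per unit of $\alpha$ along $[0,\alpha]$, for instance by examining the finitely many breakpoints where the argmax of $s_t^\alpha$ changes. At each breakpoint two items tie in $s^\alpha_t$, and Lemma~\ref{lem:stability}-type reasoning bounds the jump. I expect exactly this step to be delicate: a single breakpoint at small $\alpha$ can produce a jump of size about $2\varepsilon_t$ that does not scale with $\alpha$. Verifying that this cannot happen, or identifying the extra condition under which it cannot, is the main point to check before the summation step is carried out.
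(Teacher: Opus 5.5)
\textbf{The gap: your per-round target is false.} The inequality you aim for, $r_t(i)-r_t(j)\le 2\alpha(\varepsilon_t+\kappa_t)$, does not hold. The obstacle you flag is not a technicality, and no charging or breakpoint argument can close it, because the first display of the corollary itself fails for small $\alpha$.

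Take every round to have slate $\{i,j\}$ with (values listed for $(i,j)$) $r_t=(0.6,0.4)$, $\hat r_t=(0.51,0.5)$, $b_t=(0,1)$, and let $\alpha=0.1$. Then $\Adv$ picks $i$, but $s_t^{0.1}=(0.459,0.55)$, so $\RLA$ picks $j$. Here $\varepsilon_t=0.1$ and $\kappa_t=0.6$, so the per-round loss $0.2$ exceeds $2\alpha(\varepsilon_t+\kappa_t)=0.14$. Since $\OPT_T=0.6T$, the choice $\omega=0.6$ is admissible in Assumption~\ref{ass:opt}, and
\[
\CR_T(\RLA)=\frac{2}{3}<\frac{23}{30}=\CR_T(\Adv)-\frac{2\alpha}{\omega}(\kappa_T+\varepsilon_T).
\]
Shrinking $\alpha$, while keeping the advice margin below $\alpha/(1-\alpha)$, breaks the inequality for any fixed admissible $\omega$.

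This is exactly the mechanism you anticipated. A tiny advice margin lets the $\alpha b_t$ term flip the choice at arbitrarily small $\alpha$, while the reward gap between the two items can still be $2\varepsilon_t$, with no factor of $\alpha$. At $\alpha=0$ the same failure occurs through ties if the two argmaxes are broken differently. So even your ``$i=j$, nothing to prove'' case presupposes a common tie-breaking rule.

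\textbf{What can actually be proved.} It is what the paper's two-line argument delivers. Your own decomposition, with the naive bound on the $(1-\alpha)$-term, already gives $r_t(i)-r_t(j)\le 2(1-\alpha)\varepsilon_t+2\alpha\kappa_t$. Equivalently, as in the paper, use $W_T(\Adv)\le\OPT_T$ together with Theorem~\ref{thm:central}. Either way,
\[
\CR_T(\RLA)\ge\CR_T(\Adv)-\frac{2}{\omega}\big((1-\alpha)\varepsilon_T+\alpha\kappa_T\big).
\]
This penalty is at most $\frac{2\alpha}{\omega}(\kappa_T+\varepsilon_T)$ only when $\alpha\ge 1/2$ (or $\varepsilon_T=0$). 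So the paper's step ``combining the two inequalities yields the claim'' has precisely the hole you ran into for $\alpha<1/2$.

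The bound does yield the second claim for every $\alpha\in[0,1]$: if $\kappa_T\le C\varepsilon_T$, the penalty is at most $\frac{2(1+C)}{\omega}\varepsilon_T=O(\varepsilon_T)$. Your derivation of that second claim goes through the false first display, so you should route it through this bound instead.
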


\begin{proof}
By Corollary~\ref{cor:consistency},
$\CR_T(\Adv)\le 1$ and its loss from one is at most
$2\varepsilon_T/\omega$. Theorem~\ref{thm:central} gives the
corresponding lower bound for $\RLA$. Combining the two inequalities yields
the claim.
\end{proof}

\begin{assumption}[Corrective availability]
\label{ass:correct}
There exist $\delta>0$ and $\Delta\in[0,1]$ such that, whenever
$Q_t(g)-Q_t(h)>\delta T$, any available item from the over-exposed group
$g$ can be replaced by an available item from group $h$ whose robust score
is lower by at most $\Delta$.
\end{assumption}

\begin{theorem}[Finite-time exposure control]
\label{thm:fairness}
Under Assumptions~\ref{ass:bounded} and~\ref{ass:correct}, if
\[
\lambda>\frac{\Delta}{\delta T},
\]
then
\[
\max_{g,h\in\calG}\{Q_{T+1}(g)-Q_{T+1}(h)\}
\le \delta T+2.
\]
Consequently,
\[
\Gap_T^\rho(\FLA)\le \delta+\frac{2}{T}.
\]
\end{theorem}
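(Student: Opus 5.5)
We control the pairwise differences $D_t(g,h)=Q_t(g)-Q_t(h)$ with a drift (``no-crossing'') argument, then convert the pairwise bound into the exposure gap.

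\textbf{Step 1: one-step increments.} From the update rule, $Q_{t+1}(g)-Q_t(g)=\mathbf 1\{g(a_t)=g\}-\rho_g\in[-1,1]$. Hence $D_{t+1}(g,h)-D_t(g,h)=\mathbf 1\{g(a_t)=g\}-\mathbf 1\{g(a_t)=h\}-\rho_g+\rho_h$, which lies in $[-2,2]$. It can be positive only if $g(a_t)=g$, or if no item of $g$ or $h$ is chosen and $\rho_h>\rho_g$. In the latter case the increment is at most $\rho_h-\rho_g\le 1$.

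\textbf{Step 2: the correction prevents further growth above threshold.} Fix $t$ with $D_t(g,h)>\delta T$ and suppose $\FLA$ picks $i$ with $g(i)=g$. Assumption~\ref{ass:correct} gives an available $j$ with $g(j)=h$ and $s^\alpha_t(j)\ge s^\alpha_t(i)-\Delta$. The penalized objectives then satisfy
\[
\bigl(s^\alpha_t(j)-\lambda Q_t(h)\bigr)-\bigl(s^\alpha_t(i)-\lambda Q_t(g)\bigr)\ge -\Delta+\lambda D_t(g,h)>-\Delta+\lambda\delta T>0,
\]
because $\lambda>\Delta/(\delta T)$. This contradicts optimality of $i$, so above the threshold no $g$-item is chosen. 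The only remaining source of positive drift is the deterministic $\rho_h-\rho_g$ term.

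\textbf{Step 3: excursion argument.} Let $\tau$ be the last time with $D_\tau(g,h)\le\delta T$; this exists since $D_1=0$. Then $D_{\tau+1}\le\delta T+2$ by Step 1. For $t>\tau$ we are above threshold, so by Step 2 the increments are $-\mathbf 1\{g(a_t)=h\}-\rho_g+\rho_h$.

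\textbf{Main obstacle.} These increments need not be nonpositive, because the drift $\rho_h-\rho_g$ may be positive. I would first try to strengthen the reading of Assumption~\ref{ass:correct}: interpret it as applying to any chosen item whose group's queue exceeds $Q_t(h)$ by more than $\delta T$, and take $h$ to be the group of minimal $Q_t$. Then, above threshold, the chosen item must belong to a group whose $Q$ is within $\delta T$ of the minimum. With that reading I would control $\max_g Q_t-\min_h Q_t$ directly: the maximal group is never chosen while the spread exceeds $\delta T$, so $\max Q$ decreases by $\rho_{g}$ per step and cannot grow. Combined with the one-step jump bound of $2$, this gives $\max_{g,h}D_{T+1}\le\delta T+2$. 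If that reading is unavailable, I would fall back on the fact that $\sum_g Q_t(g)=0$ identically, and track the potential $\max_g Q_t(g)$, which increases only when a maximal group is selected. Such a selection is forbidden above the threshold relative to $\min_h Q_t(h)$.

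\textbf{Step 4: conversion to the gap.} Since $\sum_g Q_{T+1}(g)=T-T\sum_g\rho_g=0$, we have $\min_h Q_{T+1}(h)\le 0\le\max_g Q_{T+1}(g)$. Hence $|Q_{T+1}(g)|\le\max_{g,h}D_{T+1}(g,h)\le\delta T+2$ for every $g$. Finally $E_T(g)-\rho_g=Q_{T+1}(g)/T$, which yields $\Gap^\rho_T(\FLA)\le\delta+2/T$.
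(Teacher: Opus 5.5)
\textbf{Verdict: genuine gap.} Your Steps 1, 2 and 4 are correct, and the obstacle you isolate in Step 3 is real. Neither remedy removes it, though.

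Both remedies only control the top of the queue vector. Any selected group lies within $\delta T$ of $m_t=\min_h Q_t(h)\le 0$, so you obtain $\max_g Q_t(g)\le \delta T+1$. But the spread is $\max_g Q_t(g)-m_t$. While the minimizing group goes unselected, $m_t$ keeps falling by $\rho_h$ per step. So a frozen maximum does not bound the spread, and your ``combined with the one-step jump bound of $2$'' conclusion does not follow. The identity $\sum_g Q_t(g)=0$ only gives $m_t\ge-(|\calG|-1)(\delta T+1)$.

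The paper's own proof follows your Steps 1--3 and has the same hole. Its claim that ``the policy cannot increase an already excessive pairwise imbalance'' ignores the drift $\rho_h-\rho_g$ from third-group selections. That claim is valid only when $|\calG|=2$: then, above threshold, $h$ must be chosen and $D$ moves by $-2\rho_g\le 0$.

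No argument can close this, because the statement fails for three or more groups. Singleton slates make Assumption~\ref{ass:correct} vacuous whenever the offered group is within $\delta T$ of $\min_h Q_t(h)$, and they make $\lambda$ irrelevant.

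\emph{Three groups, pairwise bound fails.} Take groups $A,B,C$ with $\rho=(0,\tfrac12,\tfrac12)$ and $\delta T=100$. Offer an $A$-item for $t=1,\dots,67$; the spread at each decision is $1.5(t-1)\le 99$. This reaches $Q_{68}=(67,-33.5,-33.5)$. Then offer a $C$-item for $101$ steps; at each decision $Q_t(C)-Q_t(B)\le 100$. With $T=168$ you end at $Q_{169}=(67,-84,17)$, a spread of $151>\delta T+2$.

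\emph{Five groups, exposure bound fails.} Take $A_1,A_2,B,C_1,C_2$ with $\rho=(0,0,\tfrac13,\tfrac13,\tfrac13)$ and $\delta T=100$. Alternate $A_1,A_2$ for $120$ steps, then $C_1,C_2$ for $202$ steps, so $T=322$. This respects the same within-$\delta T$ constraint and never selects $B$. Hence $\Gap_T^\rho(\FLA)=\tfrac13>\delta+\tfrac2T=\tfrac{102}{322}$.

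\emph{What survives.} Under the stated hypotheses one can only prove the one-sided bound $\max_g Q_{T+1}(g)\le\delta T+1$. This gives $\Gap_T^\rho(\FLA)\le(|\calG|-1)(\delta+1/T)$. As written, the theorem is correct in general only for $|\calG|=2$. For more groups it needs a stronger hypothesis that also forces service of the most under-exposed group.
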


\begin{proof}
Suppose $Q_t(g)-Q_t(h)>\delta T$. By Assumption~\ref{ass:correct}, an available group-$g$ item can be replaced by a group-$h$ item with robust-score loss at most $\Delta$. The penalized-score advantage of the group-$g$ item is then at most
\[
\Delta-\lambda(Q_t(g)-Q_t(h))<0.
\]
Thus the policy cannot increase an already excessive pairwise imbalance. Since one decision changes any pairwise imbalance by at most two, the queue bound follows. Finally,
$Q_{T+1}(g)=T(E_T(g)-\rho_g)$, which gives the exposure-gap bound.
\end{proof}

\begin{proposition}[Incentive dampening]
\label{prop:strategy}
If an item can change its advice score by at most $m\ge0$, then its one-period score gain under $\RLA$ or $\FLA$ is at most $(1-\alpha)m$. Hence any manipulation of size $m$ with cost larger than $(1-\alpha)m$ is unprofitable.
\end{proposition}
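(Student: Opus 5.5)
The plan is to make precise what "score gain" means and then use only the linearity of the robust score in the advice vector. I would formalize a manipulation as follows. Item $i\in\calA_t$ replaces its advice score $\hat r_t(i)$ by some $\tilde r_t(i)$ with $|\tilde r_t(i)-\hat r_t(i)|\le m$. All other advice entries, the fallback vector $b_t$, and the current queue state $Q_t$ stay unchanged. The quantity to bound is the change in the score that item $i$ faces in the argmax that defines each policy. For $\RLA$ this is $s_t^\alpha(i)$; for $\FLA$ it is $s_t^\alpha(i)-\lambda Q_t(g(i))$.

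The key step is a one-line computation. Write $\tilde s_t^\alpha$ for the manipulated score. Since $s_t^\alpha(i)=(1-\alpha)\hat r_t(i)+\alpha b_t(i)$ and $b_t(i)$ is untouched, we get
\[
\tilde s_t^\alpha(i)-s_t^\alpha(i)=(1-\alpha)\big(\tilde r_t(i)-\hat r_t(i)\big)\le(1-\alpha)m .
\]
For $\FLA$, the penalty $\lambda Q_t(g(i))$ depends only on past actions $a_1,\dots,a_{t-1}$. It is therefore fixed during period $t$ and cancels in the difference, so the same bound holds. Because $\alpha\in[0,1]$, the factor $1-\alpha$ is nonnegative, so no sign issue arises. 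If one prefers to measure the gain relative to competitors, i.e.\ the change in $s_t^\alpha(i)-s_t^\alpha(j)$ for any $j\ne i$, the bound is the same, since $s_t^\alpha(j)$ is unchanged.

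For the incentive conclusion, I would interpret the benefit of a manipulation of size $m$ as the one-period score gain, or as any utility that is bounded by it. The net payoff is then at most $(1-\alpha)m-c$, which is negative whenever the cost satisfies $c>(1-\alpha)m$. Hence such a manipulation is unprofitable.

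The main obstacle is interpretive rather than technical: pinning down "score gain" so that it is a well-defined one-period quantity. In particular, the manipulation must not retroactively change $Q_t$. If the intended notion of gain included effects on future periods through $Q_{t+1}$, the argument would need more work, so I would state the one-period, fixed-history interpretation explicitly. Under that reading, the proof is just linearity of $s_t^\alpha$ in $\hat r_t$.
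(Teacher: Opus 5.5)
Your proof is correct and matches the paper's argument: the advice enters $s_t^\alpha$ only through the coefficient $1-\alpha$, so a perturbation of size $m$ changes the score, including the penalized $\FLA$ score, by at most $(1-\alpha)m$. Two points the paper leaves implicit are sensible clarifications: that $Q_t$ is fixed during period $t$, and that the manipulation's benefit must be measured in score units for the unprofitability conclusion.
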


\begin{proof}
The advice enters the decision score only through the coefficient $1-\alpha$. A perturbation of magnitude $m$ can therefore change the score by at most $(1-\alpha)m$.
\end{proof}

\section{Computational study}
\label{sec5}
We evaluate the proposed policies on online allocation instances derived from the MovieLens 1M data set. The raw data contain user-movie ratings. We interpret each arriving user as an online request and the available movies as the feasible slate. Ratings are rescaled to $[0,1]$ and used as realized rewards. At each period, the policy selects one movie from the slate irrevocably.

\paragraph{Instance construction.}
A matrix-factorization model is trained on a fixed training split and used to generate the advice vector $\hat r_t$. The conservative score $b_t$ is a popularity-calibrated score, adjusted to avoid over-reliance on the learned predictor. Candidate slates contain both high-score items and lower-quality decoy items, so that random and popularity policies are nontrivial but not artificially favored. Movie groups define the exposure categories, and the target vector $\rho$ is set to the empirical group distribution in the candidate pool. All results are averaged over independent arrival sequences generated with fixed random seeds.

\paragraph{Stress regimes.}
We consider three regimes. In the benign regime, advice is perturbed by mean-zero noise of level $\sigma$. In the adversarial-advice regime, advice is systematically biased across exposure groups, mimicking strategic or discriminatory score distortion. In the distribution-shift regime, test arrivals over-sample a subpopulation whose preferences differ from the training distribution. These regimes are designed to test the consistency--robustness trade-off predicted by Theorem~\ref{thm:central} and Corollary~\ref{cor:certificate}.

\paragraph{Policies and metrics.}
We compare RANDOM, POPULARITY, ADVICE, ROBUST-LA, and FAIR-LA. Performance is measured by the empirical competitive ratio $\CR_T$ relative to the slate-wise offline benchmark and by the exposure fairness gap $\Gap_T^\rho$. For the robustness certificate, we also report the empirical advice error $\varepsilon_T$ and the gain in competitive ratio relative to ADVICE.

\begin{figure}[H]
    \centering
    \includegraphics[width=\linewidth]{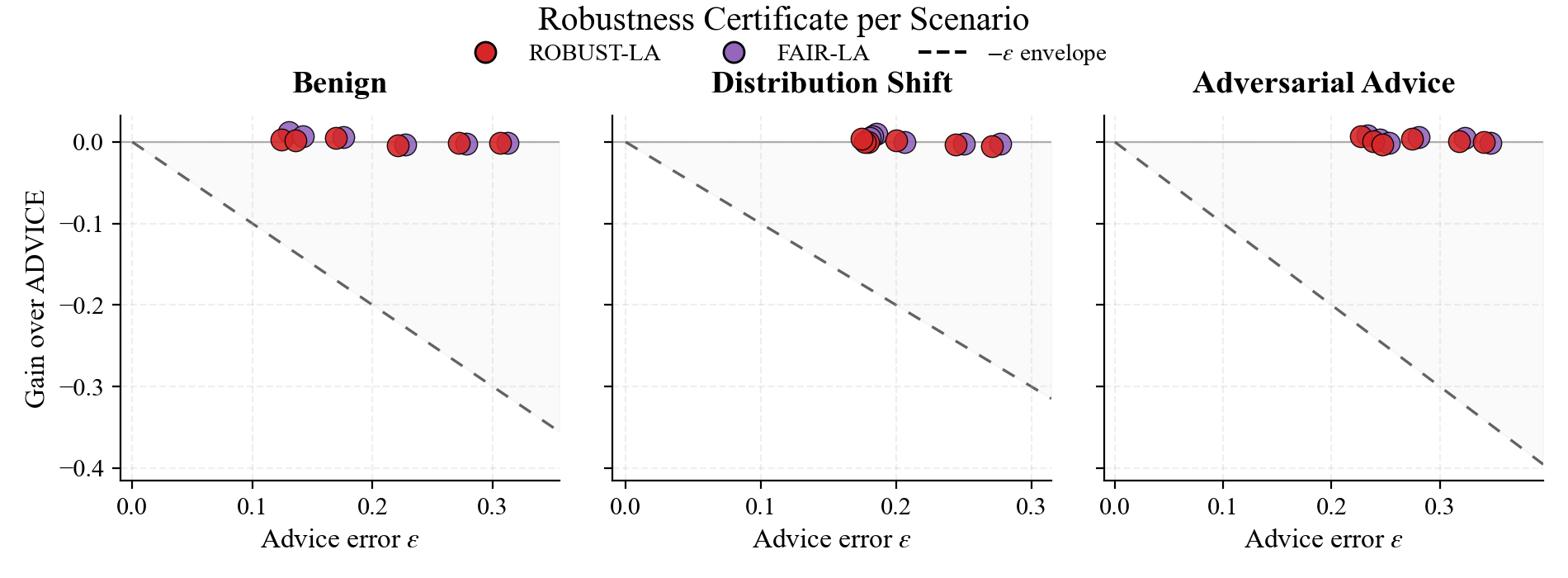}
    \caption{Robustness certificate across scenarios. The vertical axis reports the competitive-ratio gain over ADVICE and the horizontal axis reports the empirical advice error $\varepsilon_T$. The dashed line is the $-\varepsilon_T$ envelope. The points remain above this envelope, in line with the error-dependent guarantee in Corollary~\ref{cor:certificate}.}
    \label{fig:robustness_certificate}
\end{figure}

Figure~\ref{fig:robustness_certificate} provides the empirical counterpart of the advice-relative robustness certificate. Across benign noise, distribution shift, and adversarial advice, ROBUST-LA and FAIR-LA remain close to or above the advice-only rule, with no collapse as advice error increases. This supports the interpretation that conservative interpolation prevents catastrophic degradation when predictions are unreliable.

\begin{figure}[H]
    \centering
    \includegraphics[width=\linewidth]{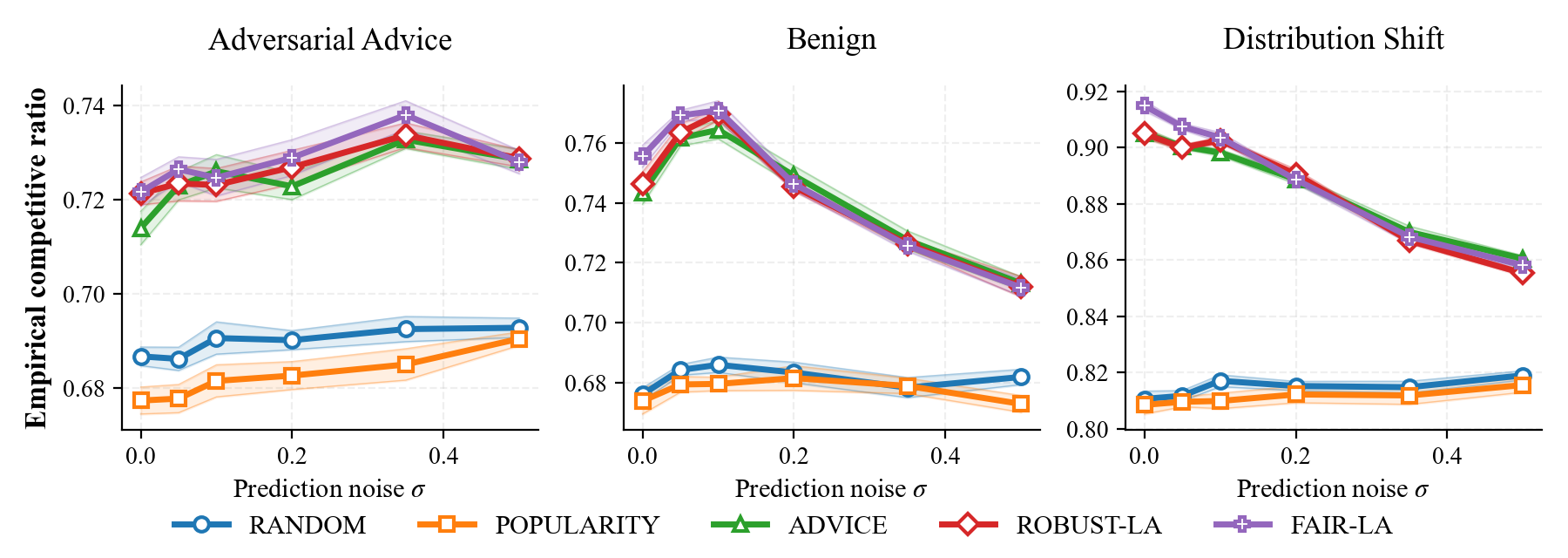}
    \caption{Competitive ratio under prediction noise. ROBUST-LA and FAIR-LA preserve high competitive ratios across the three stress regimes, while RANDOM and POPULARITY are consistently separated from the learning-augmented policies.}
    \label{fig:competitive_ratio_noise}
\end{figure}

Figure~\ref{fig:competitive_ratio_noise} shows that the learning-augmented policies retain a clear efficiency advantage over RANDOM and POPULARITY. In the adversarial and benign regimes, the proposed rules remain stable as $\sigma$ varies. Under distribution shift, competitive ratios remain high for all learning-augmented policies, while non-personalized baselines stay near $0.81$.

\begin{figure}[H]
    \centering
    \includegraphics[width=\linewidth]{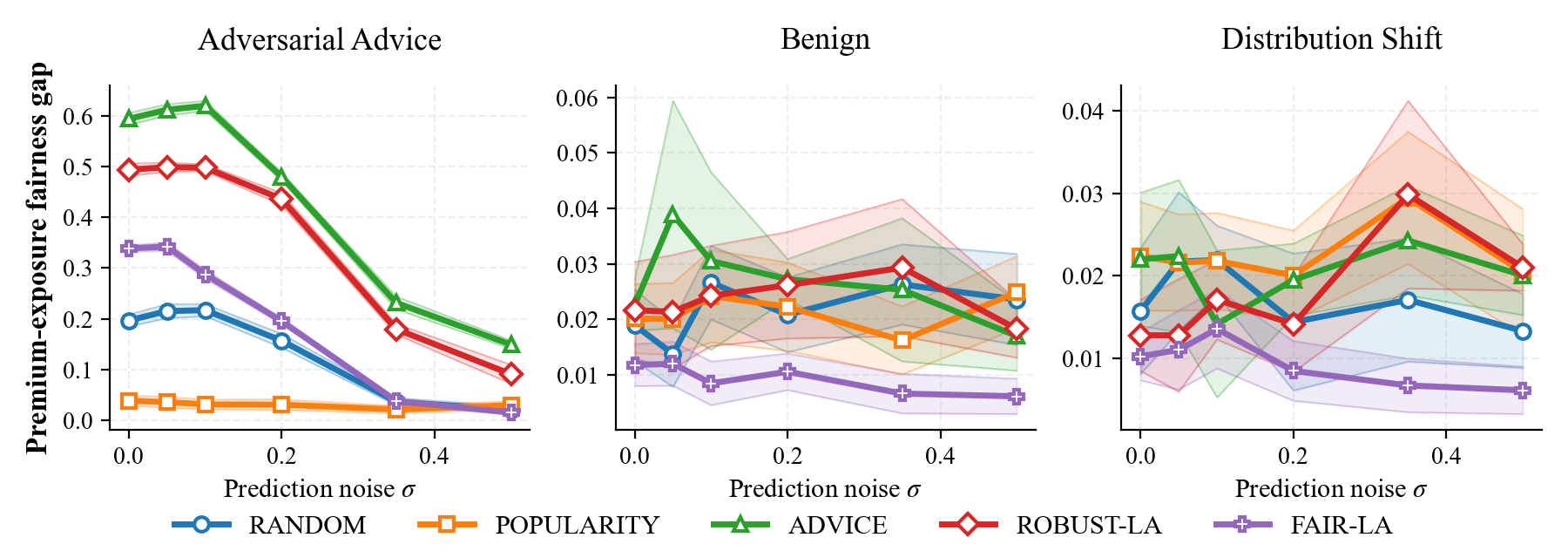}
    \caption{Exposure fairness gap under prediction noise. FAIR-LA sharply reduces exposure imbalance, especially under adversarial advice, where the advice-only policy induces large disparities.}
    \label{fig:fairness_gap_noise}
\end{figure}

Figure~\ref{fig:fairness_gap_noise} confirms the role of the virtual exposure correction. Under adversarial advice, ADVICE produces severe exposure imbalance, whereas FAIR-LA reduces the gap substantially. In benign and shifted regimes, FAIR-LA also delivers the lowest or near-lowest exposure gap over most noise levels. POPULARITY can occasionally have a small fairness gap, but this is obtained with much lower competitive ratio; hence it does not provide the same efficiency-fairness trade-off.

\begin{figure}[H]
    \centering
    \includegraphics[width=\linewidth]{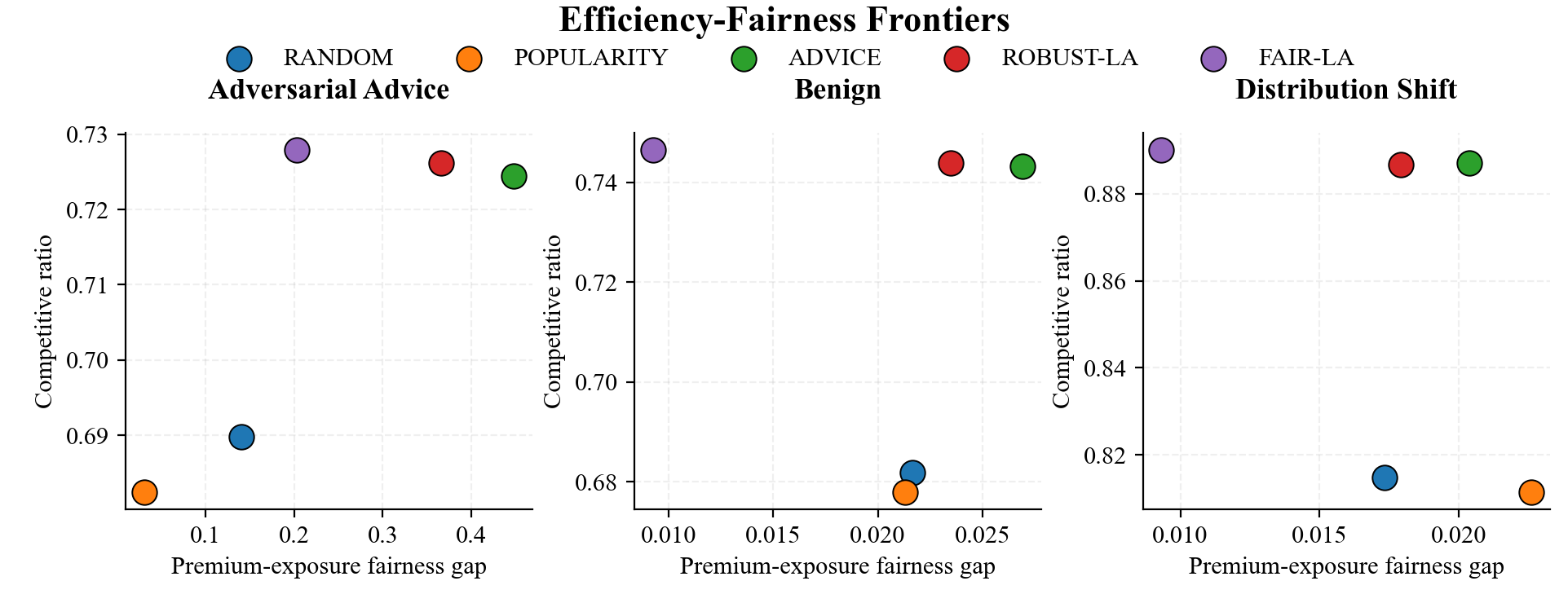}
    \caption{Efficiency--fairness frontiers. Each point reports the average competitive ratio and exposure fairness gap of one policy. FAIR-LA lies on the favorable frontier by combining high efficiency with low exposure disparity.}
    \label{fig:efficiency_fairness_frontier}
\end{figure}

Figure~\ref{fig:efficiency_fairness_frontier} summarizes the trade-off. FAIR-LA is the most stable policy on the efficiency--fairness frontier: it sacrifices little competitive ratio relative to ROBUST-LA or ADVICE while achieving substantially lower exposure disparity. This is the main empirical message of the study.

\begin{table}[h]
\centering
\resizebox{8.3cm}{!}{
\begin{tabular}{llcc@{\hspace{2em}}cc@{\hspace{2em}}cc}
\toprule
& & \multicolumn{2}{c}{\textbf{Benign}} & \multicolumn{2}{c}{\textbf{Dist. Shift}} & \multicolumn{2}{c}{\textbf{Adversarial}} \\
\cmidrule(lr){3-4} \cmidrule(lr){5-6} \cmidrule(lr){7-8}
\textbf{Noise ($\sigma$)} & \textbf{Policy} & \textbf{CR} $\uparrow$ & \textbf{FG} $\downarrow$ & \textbf{CR} $\uparrow$ & \textbf{FG} $\downarrow$ & \textbf{CR} $\uparrow$ & \textbf{FG} $\downarrow$ \\
\midrule
\multirow{5}{*}{0.00} 
& ADVICE      & 0.743 & 0.022          & 0.905 & 0.022          & 0.714 & 0.594 \\
& POPULARITY  & 0.674 & 0.020          & 0.808 & 0.022          & 0.677 & \textbf{0.038} \\
& RANDOM      & 0.676 & 0.019          & 0.811 & 0.016          & 0.687 & 0.196 \\
& ROBUST-LA   & 0.746 & 0.022          & 0.905 & 0.013          & 0.721 & 0.494 \\
& FAIR-LA     & \textbf{0.755} & \textbf{0.012} & \textbf{0.915} & \textbf{0.010} & \textbf{0.722} & 0.339 \\
\midrule
\multirow{5}{*}{0.10} 
& ADVICE      & 0.764 & 0.030          & 0.898 & 0.014          & \textbf{0.726} & 0.620 \\
& POPULARITY  & 0.680 & 0.024          & 0.810 & 0.022          & 0.681 & \textbf{0.031} \\
& RANDOM      & 0.686 & 0.027          & 0.817 & 0.022          & 0.690 & 0.217 \\
& ROBUST-LA   & 0.769 & 0.024          & 0.903 & 0.017          & 0.723 & 0.498 \\
& FAIR-LA     & \textbf{0.771} & \textbf{0.008} & \textbf{0.904} & \textbf{0.013} & 0.725 & 0.288 \\
\midrule
\multirow{5}{*}{0.50} 
& ADVICE      & 0.713 & 0.017          & 0.860 & 0.020          & 0.729 & 0.149 \\
& POPULARITY  & 0.673 & 0.025          & 0.816 & 0.020          & 0.690 & 0.029 \\
& RANDOM      & 0.682 & 0.024          & \textbf{0.819} & 0.013          & 0.693 & 0.020 \\
& ROBUST-LA   & 0.712 & 0.018          & 0.855 & 0.021          & \textbf{0.729} & 0.091 \\
& FAIR-LA     & 0.712 & \textbf{0.006} & 0.858 & \textbf{0.006} & 0.728 & \textbf{0.015} \\
\bottomrule
\end{tabular}
}
\caption{\textbf{Performance Comparison across Scenarios.} We report the mean Competitive Ratio (CR $\uparrow$) and Fairness Gap (FG $\downarrow$). Robust-LA and Fair-LA consistently bridge the gap between pure Advice and baseline policies.}
\label{tab:results_summary}
\end{table}

Table~\ref{tab:results_summary} reports representative noise levels. The table highlights two points. First, RANDOM and POPULARITY are not competitive in efficiency: they are consistently below the learning-augmented policies in CR. Second, FAIR-LA achieves the most reliable fairness improvement. In adversarial settings, ADVICE can achieve high CR but at the cost of extreme exposure gaps; FAIR-LA substantially reduces this disparity while preserving nearly the same CR.

\section{Conclusion}

We studied online allocation with learned advice, conservative fallback scores, and exposure-fairness targets. The proposed rule is deliberately simple: interpolate between advice and fallback scores, then penalize cumulative exposure imbalance. This simplicity yields finite-horizon guarantees. The central bound shows that performance degrades with a weighted combination of advice error and fallback error; the corollaries recover consistency, fallback protection, and the advice-relative certificate $\CR_T(\RLA)\ge \CR_T(\Adv)-O(\varepsilon_T)$. A virtual-queue argument gives finite-time exposure control.

The computational study supports these conclusions on MovieLens-derived online allocation instances. ROBUST-LA protects efficiency under unreliable advice, while FAIR-LA provides the strongest efficiency--fairness compromise. In particular, FAIR-LA sharply reduces exposure disparity under adversarial advice without collapsing in competitive ratio. Future work may extend the analysis to hard matching capacities and to endogenous, strategically generated advice.

\section*{Data Availability}

All numerical experiments in this study are based on synthetic benchmark environments generated algorithmically by the authors. 

\section*{Code Availability}

The Python code used to generate the benchmark environments, compute the optimal policies via dynamic programming, train all boundary-based and reinforcement-learning baselines, and reproduce the tables and figures is available from the corresponding author upon reasonable request.

\FloatBarrier
\setcounter{figure}{0}
\renewcommand{\thefigure}{\Alph{section}\arabic{figure}}

\setcounter{table}{0}
\renewcommand{\thetable}{\Alph{section}\arabic{table}}



\normalsize
\bibliography{references}


\end{document}